\documentclass{article}

\PassOptionsToPackage{dvipsnames}{xcolor}

\usepackage[T1]{fontenc}

\usepackage{microtype}
\usepackage{graphicx}
\usepackage{subcaption}
\usepackage{booktabs}
\usepackage{hyperref}

\usepackage[preprint]{icml2026}

\makeatletter
\renewcommand{\Notice@String}{%
  \textit{Accepted at the 3rd AI for Math Workshop (AI4Math), ICML 2026,
  Seoul, South Korea. Workshop proceedings are non-archival.}%
}
\makeatother

\usepackage{amsmath}
\usepackage{amssymb}
\usepackage{amsfonts}
\usepackage{mathtools}
\usepackage{amsthm}
\usepackage[capitalize,noabbrev]{cleveref}
\usepackage{nicefrac}
\usepackage{multirow}
\usepackage{listings}
\usepackage{pifont}
\usepackage{enumitem}

\definecolor{veritas}{HTML}{D55E00}
\definecolor{vorange}{HTML}{F97316}
\definecolor{vblue}{HTML}{2563EB}

\newcommand{\Sone}{\ensuremath{S_1}}
\newcommand{\Stwo}{\ensuremath{S_2}}
\newcommand{\sigA}{\ensuremath{\sigma_A}}
\newcommand{\sigB}{\ensuremath{\sigma_B}}
\newcommand{\sigC}{\ensuremath{\sigma_C}}
\newcommand{\sigD}{\ensuremath{\sigma_D}}

\theoremstyle{plain}
\newtheorem{theorem}{Theorem}[section]
\newtheorem{proposition}[theorem]{Proposition}

\theoremstyle{definition}

\theoremstyle{remark}

\icmltitlerunning{VERITAS: Verifier-Guided Proof Search}

\begin{document}

\twocolumn[
  \icmltitle{VERITAS: Verifier-Guided Proof Search\\
    for Zero-Shot Formal Theorem Proving}

  \begin{icmlauthorlist}
    \icmlauthor{Manish Acharya}{vanderbilt}
    \icmlauthor{Zhenyu Liao}{amazon}
    \icmlauthor{Yueke Zhang}{vanderbilt}
    \icmlauthor{Kevin Leach}{vanderbilt}
    \icmlauthor{Yu Huang}{vanderbilt}
    \icmlauthor{Yifan Zhang}{vanderbilt}
  \end{icmlauthorlist}

  \icmlaffiliation{vanderbilt}{Department of Computer Science,
    Vanderbilt University, Nashville, TN, USA}
  \icmlaffiliation{amazon}{Amazon, Seattle, WA, USA}

  \icmlcorrespondingauthor{Yifan Zhang}{yifan.zhang.2@vanderbilt.edu}

  \icmlkeywords{formal theorem proving, verifier feedback,
    Monte Carlo tree search, large language models, AI for math}

  \vskip 0.3in
]

\printAffiliationsAndNotice{}

\makeatletter
\gdef\@icmltitlerunning{VERITAS: Verifier-Guided Proof Search for Zero-Shot Formal Theorem Proving}
\makeatother

\setcounter{topnumber}{3}
\setcounter{dbltopnumber}{2}
\renewcommand{\topfraction}{0.92}
\renewcommand{\bottomfraction}{0.85}
\renewcommand{\textfraction}{0.08}
\renewcommand{\floatpagefraction}{0.80}
\setlength{\textfloatsep}{8pt plus 2pt minus 2pt}
\setlength{\floatsep}{8pt plus 2pt minus 2pt}
\setlength{\intextsep}{8pt plus 2pt minus 2pt}

\begin{abstract}
LLM-based formal provers often collapse rich verifier signals (syntax
errors, type mismatches, partial goal progress) into a binary
pass/fail bit.
We present \textbf{VERITAS}, a zero-shot framework that routes every
verifier signal back into proof search through a two-phase protocol:
Best-of-$N$ sampling first, then a critic-guided MCTS pass that
ingests Phase~1 failures as explicit negative examples.  The protocol
preserves every theorem solved by its own Phase~1 sweep, so Phase~2's
additional solves are attributable to feedback-driven exploration.
VERITAS reaches \textbf{40.6\%} on miniF2F (vs.\ an independently run
Best-of-5 at 36.9\%, Portfolio 26.2\%) and \textbf{7.3\%} on
\textbf{VERITAS-CombiBench}, a 55-theorem
combinatorics benchmark we release on which Best-of-5 (1.8\%) falls
\emph{below} Portfolio (3.6\%), exposing that unguided sampling
hurts when correct lemma names must be recovered iteratively from
verifier feedback.  Artifacts are available on
\href{https://github.com/manishacharya60/veritas}{GitHub}.
\end{abstract}

\section{Introduction}

Formal theorem proving, the construction of machine-checkable proofs in
interactive proof assistants such as Lean, Coq, and Isabelle, has become a
flagship benchmark for evaluating AI systems on multi-step formal thought,
because success admits a crisp definition: a candidate proof is either
accepted by the verifier or it is not. This property has driven a steady
stream of progress, pushing solve rates on miniF2F~\citep{zheng2021minif2f}
from below 30\% to above
80\%~\citep{polu2020gptf,yang2023leandojo,internlmstepprover2024,deepseek2025proverv2}.
Two complementary families drive this progress: language models that
\emph{generate} candidate
tactics~\citep{polu2020gptf,han2022pact,yang2023leandojo,azerbayev2023llemma},
and tree-search methods that \emph{explore} the combinatorial space of
partial
proofs~\citep{kocsis2006uct,silver2018alphazero,lample2022hypertree,alphaproof2024}.
Most modern systems combine the two: an LLM proposes, search disposes.

What is striking about this combination is how thinly the two components
\emph{communicate}. In a typical pipeline, the LM emits a batch of
candidate tactics, the verifier checks each one, and the search algorithm
updates value estimates from the resulting pass-or-fail outcomes. Anything
richer that the verifier produces along the way (a parse error pointing at
a malformed token, a type mismatch naming the offending hypothesis, a
partial goal that has been simplified but not closed) is summarised into a
scalar reward or thrown away entirely. Yet these intermediate signals are
exactly the artefacts a human prover attends to: collapsing them into a
single bit forces the search to relearn what the verifier has already told
it explicitly. The cost is most visible on proofs that require a chain of
\emph{dependent} tactics, where each step's correctness hinges on the
previous step's effect on the goal. Recent rich-feedback methods in
RL~\citep{huebotter2026sdpo} make the same observation in the post-training
regime; we explore the dual at inference time.

We present \textbf{VERITAS}, a zero-shot framework built around a single
design commitment: \emph{the verifier's structured feedback should reach
every generation decision}.  We factor proof search into four specialised
agents sharing a unified proof state. A \emph{Strategist} chooses a
high-level approach (direct computation, induction, rewriting,
contradiction) before any tactics are emitted, narrowing the search to a
strategy-relevant subspace. A \emph{Tactician} (Claude Sonnet) proposes
tactic candidates conditioned on the current goal, retrieved Mathlib
premises, and crucially the explicit list of tactics that have already
failed earlier in the search, with their Lean error messages attached as
negative examples. A \emph{Critic} (Claude Haiku) scores partial proof
states, replacing MCTS's random rollouts with a value estimate that
integrates Lean's four-way feedback signal. A \emph{Retriever} grounds
generation in Mathlib by surfacing relevant lemmas. These agents are
coordinated by a deliberately conservative two-phase protocol: Phase~1 is
identical to a Best-of-$N$ sweep, and Critic-guided MCTS in Phase~2 runs
only on theorems Phase~1 failed. This yields a \emph{monotonicity
guarantee}: VERITAS preserves every theorem solved by its own Phase~1
attempts.  The reported Best-of-5 baseline is an independent stochastic
run, so its overlap with VERITAS need not be a subset; within a VERITAS
run, however, every Phase~2-only solve is attributable to MCTS with
verifier feedback. A batched Lean validation trick further cuts per-expansion
verification from $O(K)$ to $O(1)$ proof-checker invocations.

Across two Lean~4 benchmarks the experiments give a consistent picture:
tightly coupling the verifier with generation matters more than scaling
either component alone. On miniF2F, VERITAS reaches 40.6\% (99/244),
above Best-of-5 Claude (36.9\%) and a handcrafted Portfolio (26.2\%); a
phase-wise decomposition identifies 11 theorems no flat-sampling budget can
reach. The picture sharpens on \textbf{VERITAS-CombiBench}: there, Best-of-5
(1.8\%) falls \emph{below} the Portfolio (3.6\%) because unguided sampling
hallucinates Mathlib lemma names Lean rejects on sight, while VERITAS
reaches 7.3\% by correcting those names iteratively from Lean's errors. We
read these results as evidence for a general design principle: when a
deterministic verifier emits structured intermediate signals, folding them
back into generation amplifies the LLM substantially more than additional
sampling does.

\paragraph{Contributions.}
(i)~A zero-shot four-agent framework that routes Lean's structured feedback
(syntax/type/progress/completion) into every generation decision;
(ii)~a two-phase Best-of-$N$$\;\rightarrow\;$Critic-MCTS protocol
with an internal monotonicity guarantee that isolates the search contribution;
(iii)~a batched Lean validation trick that reduces verification cost by
$10\times$; (iv)~\textbf{VERITAS-CombiBench}, a benchmark of 55 hard,
author-verified Lean~4 combinatorics theorems we release; and
(v)~a phase-wise attribution showing 11 miniF2F theorems uniquely solvable
by MCTS, plus a failure-mode taxonomy that locates the next frontier for
verifier-in-the-loop search.

\section{The VERITAS Framework}
\label{sec:framework}

VERITAS coordinates four specialized agents (Strategist, Tactician, Critic,
and Retriever) through a shared proof state and a Critic-guided MCTS loop,
with all agents observing the same structured verification signals from
Lean.  Figure~\ref{fig:architecture} shows the full pipeline and
Algorithm~\ref{alg:veritas} gives the protocol in pseudo-code.  The
division of labor separates strategic choice, tactic synthesis, state
valuation, and premise retrieval while keeping all roles grounded in the
same verifier state.

\paragraph{Proof state and verifier signals.}
All agents operate on a shared \texttt{ProofState}
\(s=(\mathcal{T},g,H,\tau_{1:t},\sigA,\sigB,\sigC,\sigD)\), where
$\mathcal{T}$ is the theorem statement, $g$ the current goal, $H$ the
hypothesis set, $\tau_{1:t}$ the tactic history, and the four $\sigma$
values are Lean's structured signals: \sigA{} syntax validity,
\sigB{} type correctness, \sigC{} goal progress (e.g.\ subgoal reduction),
and \sigD{} proof completion (no remaining goals or \texttt{sorry}).  Every
agent's output conditions the next agent's input within an iteration.

\paragraph{Agents.}
The \emph{Strategist} maps a proof state to one of six high-level
strategies (\texttt{direct}, \texttt{induction}, \texttt{contradiction},
\texttt{cases}, \texttt{rewrite}, \texttt{apply\_lemma}) together with
a depth estimate, restricting subsequent search to a strategy-relevant
subspace.  The \emph{Tactician} (Claude
Sonnet~4.6, temperature 0.8) generates $K$ tactic candidates conditioned on
the strategy, retrieved premises, Critic guidance, and crucially the failure
set $\mathcal{F}$ of tactics rejected earlier (injected into the system
prompt as negative examples).  The \emph{Critic} (Claude Haiku~4.5,
temperature 0.3) returns a value $V(s)\in[0,1]$, suggested next tactics, and
a pruning recommendation, providing low-latency guidance for selection.
The \emph{Retriever} surfaces the top $k{=}5$ Mathlib premises by weighted
keyword overlap over goal tokens, lemma names, and statement signatures
(matching type names such as \texttt{Nat}/\texttt{Finset}, operators
$+,*,\leq$, and connectives $\land,\lor$); results are appended to the
Tactician's prompt and passed to the Critic as context.

\subsection{Critic-Guided MCTS}
\label{sec:mcts}

\emph{Selection.}  We extend standard UCB with Critic value and strategy
alignment, scoring an action $a$ at state $s$ as $Q(s,a) +
c\sqrt{\ln N(s)/N(s,a)} + w_v V_\text{critic}(s') +
w_s\,\text{align}(a,\pi)$, where $\text{align}(a,\pi)\in\{0,1\}$ equals
$1$ iff the leading token of $a$ belongs to the strategy-specific tactic
vocabulary $\mathcal{V}_\pi$ (e.g.,
$\mathcal{V}_\text{induction}=\{$\texttt{induction}, \texttt{cases},
\texttt{rcases}$\}$), biasing selection toward strategy-aligned subtrees.
\emph{Expansion} runs the four agents in sequence (Strategist $\to$
Retriever $\to$ Critic $\to$ Tactician), validates all $K$ generated
candidates in a single batched Lean call (\cref{sec:batched}), and creates
child nodes for the syntactically valid ones.  \emph{Simulation} replaces
random rollouts with the Critic's value augmented by the structured
signals, $V(s) = V_\text{critic}(s) + \alpha_1\sigA + \alpha_2\sigB +
\alpha_3\sigC + \beta/(1{+}N(s))$, with $\alpha_1{=}\alpha_2{=}0.1$ and
$\alpha_3{=}0.2$ assigning increasing weight to increasingly informative
signals (goal progress is the strongest intermediate signal short of
completion) and $\beta{=}0.1$ providing an exploration bonus that decays
with visit count.  \emph{Backpropagation} carries the structured A--B--C--D
signal upward rather than collapsing to binary win/loss; this preserves
credit for partial-progress states (\sigC{}) that would otherwise be
pruned: the critical case for multi-step theorems.

\subsection{Two-Phase Design and Monotonicity Guarantee}
\label{sec:two_phase}

\textbf{Phase~1 (Best-of-$N$ sweep)} generates $N$ tactics in one Claude API
call and validates them in one batched Lean call; if any succeeds, return
immediately.  \textbf{Phase~2 (Critic-guided MCTS)} activates only if
Phase~1 fails, with the Phase~1 failure set $\mathcal{F}$ injected into the
Tactician's system prompt.

\begin{proposition}[Monotonicity]
\label{prop:monotonicity}
\textit{For any fixed VERITAS run on problem set $\mathcal{T}$ and
Phase~1 budget $N$, the final VERITAS solve set is a superset of that
run's own Phase~1 solve set: $S(\mathcal{T}) \supseteq \Sone(\mathcal{T})$.}
\end{proposition}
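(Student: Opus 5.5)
The argument is a direct case analysis on the control flow of the two-phase protocol in \cref{sec:two_phase}. The solve set of a fixed run is determined theorem by theorem, so it suffices to show that for every $T\in\mathcal{T}$, $T\in\Sone(\mathcal{T})$ implies $T\in S(\mathcal{T})$. First I will make the definitions explicit. $\Sone(\mathcal{T})$ is the set of theorems for which at least one of the $N$ Phase~1 candidates is accepted by the batched Lean call, i.e.\ attains \sigD{}. $S(\mathcal{T})=\Sone(\mathcal{T})\cup\Stwo(\mathcal{T})$, where $\Stwo$ collects theorems first closed by Critic-guided MCTS. The phrase ``fixed run'' is essential here. It means we condition on the realized stochastic outputs of the Tactician and Critic, so $\Sone$ is a deterministic set and not a random variable.

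With that in place, take $T\in\Sone(\mathcal{T})$. By the protocol, Phase~1 returns immediately on success. The accepted proof $\pi_T$ is therefore the output of VERITAS on $T$, Phase~2 is never invoked on $T$, and nothing later in the run can overwrite or discard $\pi_T$. Since $\pi_T$ was checked by Lean with \sigD{} holding (no remaining goals, no \texttt{sorry}), $T$ is counted as solved in the final set, so $T\in S(\mathcal{T})$. Conversely, theorems not in $\Sone$ either enter $\Stwo$ or remain unsolved; this can only enlarge $S$ and never removes anything. Hence $S(\mathcal{T})\supseteq\Sone(\mathcal{T})$.

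The main obstacle is not mathematical depth but pinning down the semantics so the claim is not vacuous or false. Three points need care.
\begin{enumerate}[leftmargin=*]
\item \emph{Acceptance must be deterministic.} The batched validation of \cref{sec:batched} must report success for a candidate exactly when Lean accepts it individually. Otherwise a Phase~1 success could be misattributed, so I would state this as an assumption that the verifier is sound and deterministic.
\item \emph{Phase~2 cannot reach back.} Phase~2 runs per theorem, only on Phase~1 failures, and has no shared mutable state that could revoke earlier solves. The failure set $\mathcal{F}$ only feeds prompts for unsolved theorems.
\item \emph{The comparison is internal to one run.} The guarantee does not extend to an independently sampled Best-of-$N$ baseline, whose solve set need not be contained in $S$. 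This distinction is exactly why the proposition is stated for a fixed run.
\end{enumerate}
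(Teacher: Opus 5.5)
Your proposal is correct and uses the same control-flow case analysis as the paper. A Phase~1 success is kept immediately and Phase~2 never runs on that theorem, while Phase~2 on the failures can only add to $S=\Sone\cup\Stwo$; your assumption that batched validation is sound and deterministic is not needed for the inclusion, which holds by construction of $S$ as a union, but only for the solves to be genuine.
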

\begin{proof}
If VERITAS's Phase~1 solves a theorem, that proof is immediately kept
and Phase~2 is never invoked for that theorem.  If Phase~1 fails,
Phase~2 may or may not succeed, but it cannot remove a theorem already
in $\Sone$.
\end{proof}

This guarantee is an internal attribution property: every theorem in
$\Stwo$ is a Phase~2 contribution.  It does not imply that a separately
sampled Best-of-5 baseline must be a subset of VERITAS, because the
baseline is an independent stochastic run.

\begin{figure*}[t]
  \centering
  \includegraphics[width=0.98\linewidth]{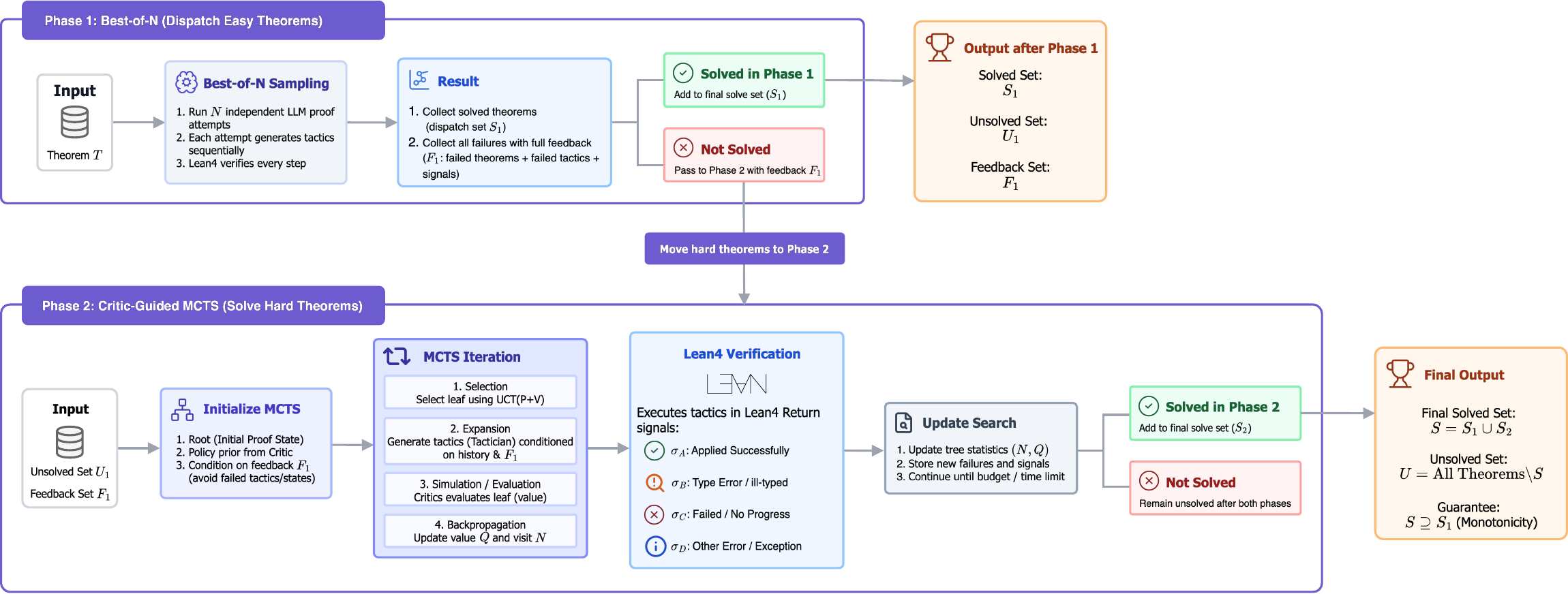}
  \caption{\textbf{VERITAS two-phase protocol.}
  Phase~1: Best-of-$N$ dispatch; failures feed corpus $\mathcal{F}_1$.
  Phase~2: Critic-guided MCTS on $U_1$ with Lean signals
  \sigA{}--\sigD{}; final $S=\Sone\cup\Stwo$ satisfies $S\supseteq\Sone$.}
  \label{fig:architecture}
\end{figure*}

\begin{algorithm}[!tb]
\caption{VERITAS Two-Phase Search}
\label{alg:veritas}
\small
\begin{algorithmic}[1]
\REQUIRE Theorems $\mathcal{T}$; budgets $N$ (Best-of-$N$), $I$ (MCTS),
   $K$ (candidates per node)
\STATE $\Sone, \Stwo \gets \emptyset$ \hfill \textit{// Phase~1: Best-of-$N$}
\FOR{$T \in \mathcal{T}$}
    \STATE $\{\tau_i\}_1^N \gets$ Claude($T$);
       $(\text{ok},\text{err})\gets\textsc{LeanBatch}(T,\{\tau_i\})$
    \IF{$\exists i\!:\!\text{ok}[i]$}
        \STATE $\Sone \gets \Sone \cup \{T\}$
    \ELSE
        \STATE $\mathcal{F}_T \gets \{(\tau_i,\text{err}[i])\}_{i=1}^N$
    \ENDIF
\ENDFOR
\STATE \hfill \textit{// Phase~2: Critic-guided MCTS}
\FOR{$T \in \mathcal{T}\setminus\Sone$}
    \STATE Init root $s_0$ from $T$ with cache $\mathcal{F}_T$
    \FOR{$i = 1,\dots,I$}
        \STATE Select leaf $s$ via UCB-with-Critic score; run
           Strategist/Retriever/Critic
        \STATE $\{\tau_k\}_1^K \gets$ Tactician($s$, $\mathcal{F}_T$);
           $(\text{ok},\text{err})\gets\textsc{LeanBatch}$
        \STATE Expand valid children; append errors to $\mathcal{F}_T$;
           backpropagate $V(s)$
        \IF{any child closes $T$}
            \STATE $\Stwo \gets \Stwo \cup \{T\}$; \textbf{break}
        \ENDIF
    \ENDFOR
\ENDFOR
\STATE \textbf{return} $S = \Sone \cup \Stwo$
\end{algorithmic}
\end{algorithm}

\subsection{Batched Lean Validation}
\label{sec:batched}

Rather than invoking Lean separately for each of $K$ candidates, we compile
them into one \texttt{.lean} file as private theorems with distinct names
(\texttt{private theorem \_p0 ... := by <tactic\_0>}, etc.) and run a single
\texttt{lake env lean --json} call.  Successful tactics are identified by the
absence of errors on their line ranges.  This reduces validation cost from
$O(K)$ to $O(1)$ Lean calls per MCTS expansion, yielding approximately
$10\times$ wall-clock speedup at $K{=}6$ and approaching $20\times$ at
$K{=}16$ where Lean's per-file overhead dominates.

\section{Experiments}
\label{sec:experiments}

We evaluate whether verifier feedback improves zero-shot proof search
under realistic inference budgets.  The section first reports aggregate
solve rates, then decomposes where the gains come from by benchmark
family, search phase, compute budget, and deepest verifier signal.

\subsection{Setup}

\paragraph{Benchmarks.}
\textbf{miniF2F}~\citep{zheng2021minif2f}: 244 competition mathematics
theorems (AIME, AMC, IMO, MATH) in Lean~4 with Mathlib.  The test split
contains 201~``hard'' problems and 43 problems annotated with
\texttt{sorry}.
\textbf{VERITAS-CombiBench}: a benchmark of 55 verified-solvable combinatorics
problems in Lean~4, sourced from IMO shortlists~(22), the Brualdi
\emph{Introductory Combinatorics} textbook~(29), and online repositories~(4).
All 55 have author-confirmed Lean~4 proofs (no \texttt{sorry}), released
under CC-BY~4.0.

\paragraph{Methods.}
We evaluate four methods spanning a $2{\times}2$ ablation on
LLM~$\times$~MCTS: \textbf{Portfolio} (36 handcrafted tactics in one batched
call, no LLM, no search); \textbf{Best-of-1} and \textbf{Best-of-5 Claude}
(1 or 5 tactics from Claude Sonnet, no search); \textbf{VERITAS Two-Phase}
(Phase~1 Best-of-5 + Phase~2 Critic-guided MCTS, 50 iterations $\times$
6~candidates per node).  Two further ablations on the 201-problem miniF2F
subset isolate component contributions: \textbf{VERITAS-Heuristic} (MCTS
with rule-based agents, no LLM) and \textbf{VERITAS-MCTSonly} (Phase~2
without the Phase~1 sweep).

\paragraph{Hyperparameters and infrastructure.}
$c{=}1.4$, $w_v{=}0.6$, $w_s{=}0.2$, $\beta{=}0.1$, max depth $10$.  Time
budgets: 300s (miniF2F), 600s (VERITAS-CombiBench).  All proofs verified via
\texttt{lake env lean --json} against Lean~4~\citep{demoura2021lean4} and
Mathlib~\citep{mathlib2020} with pre-built \texttt{.olean} files (1--5s per
theorem after warmup).  All experiments are single-run, consistent with
standard practice in inference-only Lean theorem proving where stochasticity
arises only from Claude's temperature sampling and per-run cost is
prohibitive ($\sim$\$150--200 per benchmark sweep).  95\% confidence
intervals throughout use the Wilson score interval for binomial proportions.

\subsection{Main Results: LLMs and Guided Search Are Both Necessary}
\label{sec:rq1}

Table~\ref{tab:main} (and Figure~\ref{fig:results}) report our main results.  On
miniF2F, VERITAS Two-Phase achieves \textbf{40.6\%} (99/244), outperforming
Best-of-5 by $+3.7$pp, Best-of-1 by $+11.5$pp, and Portfolio by $+14.4$pp.
Because the Best-of-5 comparison is an independent stochastic run,
we treat this gap as descriptive and report paired attribution separately.

\begin{table*}[!t]
\caption{\textbf{Main zero-shot results.}  Solve rates on miniF2F
(per-category and total) and VERITAS-CombiBench, with inference budget per
theorem.  \textbf{Bold} = best within the zero-shot regime.  Best-of-1
shares Best-of-N's per-category breakdown (subsumed at $N{=}1$); the
ablation rows are evaluated on the 201-problem hard miniF2F subset
(sorry-annotated theorems excluded).  Counts in parentheses are
solved/total.  Cost approximated from Anthropic API list prices for the
full miniF2F run.}
\label{tab:main}
\centering
\footnotesize
\setlength{\tabcolsep}{3pt}
\begin{tabular}{l rrrrrr cc rr}
\toprule
 & \multicolumn{6}{c}{\textbf{miniF2F per category (\%)}} & \textbf{miniF2F} & \textbf{combiB} & \multicolumn{2}{c}{\textbf{Inference cost}} \\
\cmidrule(lr){2-7} \cmidrule(lr){8-8} \cmidrule(lr){9-9} \cmidrule(lr){10-11}
Method & Alg. & NumT. & AMC & AIME & IMO & Other & Total & Total & Lean/thm & API (\$) \\
       & {\tiny $n{=}88$} & {\tiny $67$} & {\tiny $45$} & {\tiny $15$} & {\tiny $19$} & {\tiny $10$} & {\tiny $244$} & {\tiny $55$} & & {\tiny\itshape total} \\
\midrule
Portfolio (heuristic)        & 30.7 & 46.3 & 11.1 & \phantom{0}6.7 & \phantom{0}0.0 & \phantom{0}0.0 & 26.2 & 3.6 & \phantom{00}1 & $<$1 \\
Best-of-1 Sonnet             & \multicolumn{6}{c}{\itshape\scriptsize subsumed by Best-of-N at $N{=}1$} & 29.1 & 1.8 & \phantom{00}1 & \phantom{00}7 \\
Best-of-5 Sonnet             & 53.4 & 44.8 & 17.8 & \textbf{13.3} & \textbf{10.5} & 10.0 & 36.9 & 1.8 & \phantom{00}5 & \phantom{0}35 \\
\textbf{VERITAS Two-Phase}   & \textbf{57.9} & \textbf{49.3} & \textbf{20.0} & \textbf{13.3} & \phantom{0}5.3 & \textbf{30.0} & \textbf{40.6} & \textbf{7.3} & \phantom{0}27 & 150 \\
\midrule
\multicolumn{11}{l}{\itshape Component ablations on 201-problem hard miniF2F subset} \\
VERITAS-Heuristic (no LLM)         & \multicolumn{6}{c}{\itshape\scriptsize MCTS without any LM guidance}     & 12.4 & --- & 218 & $<$1 \\
VERITAS-MCTSonly (no Phase~1)      & \multicolumn{6}{c}{\itshape\scriptsize MCTS without the Best-of-N sweep} & 19.4 & --- & \phantom{0}60 & \phantom{0}60 \\
\bottomrule
\end{tabular}
\end{table*}

The two single-component ablations close the $2{\times}2$ grid.
\emph{MCTS without LLM:} VERITAS-Heuristic (12.4\%) is \emph{worse} than
Portfolio (26.2\%) despite using $218\times$ more Lean calls; all 25
problems it solves are a strict subset of Portfolio's.
\emph{Phase~2 without Phase~1:} VERITAS-MCTSonly (19.4\%) underperforms
Best-of-5 (25.8\% on the 201-set) because it expends MCTS budget on easy
problems flat sampling solves trivially.  Together these confirm that LLM
and search contributions are synergistic, and the two-phase design ensures
Phase~1 handles easy theorems efficiently while leaving Phase~2's full
budget for the hard ones.

This also clarifies the monotonicity claim.  The full system improves by
$+3.7$~pp over the independent Best-of-5 baseline, while
Proposition~\ref{prop:monotonicity} guarantees the narrower internal
property that VERITAS preserves its own Phase~1 solves.  The result is
therefore not merely ``more search'': it is the structured pass through
Best-of-$N$ first, followed by verifier-guided expansion only where
Phase~1 fails.

\begin{figure}[!tb]
  \centering
  \includegraphics[width=0.99\columnwidth]{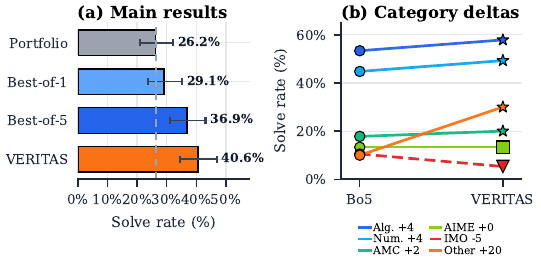}
  \caption{\textbf{Main results on miniF2F.}
  \emph{(a)}~Solve rates with 95\% Wilson CIs; dashed line: ReProver (26.5\%).
  \emph{(b)}~Per-category Best-of-5$\to$VERITAS deltas (see Table~\ref{tab:main}).}
  \label{fig:results}
\end{figure}

\subsection{VERITAS-CombiBench: When LLM Sampling Actively Hurts}
\label{sec:combi}

Portfolio (3.6\%) \emph{outperforms} both Best-of-1 and Best-of-5 (1.8\%
each): more LLM samples make things \emph{worse}.  Combinatorics in Lean~4
requires exact Mathlib names (\texttt{Nat.choose\_symm},
\texttt{Finset.card\_powerset}); flat LLM prompts hallucinate lexically wrong
names (\sigA{}$=0$), while Portfolio's \texttt{norm\_num}/\texttt{omega} cover
a valid slice.  VERITAS reaches \textbf{7.3\%} (4/55) by feeding Lean errors
back into the Tactician.  Flat Best-of-$N$ discards failures at the root;
VERITAS turns failed names into negative examples and partial goals into
high-value states.  All four successes are Brualdi problems (4/29); Portfolio
(6.9\%) still beats Best-of-5 (3.4\%) on that subset.

Flat Best-of-$N$ spends all computation at the root, so an \texttt{unknown
constant} or type mismatch is discarded after failure; VERITAS spends
computation where Lean has already revealed structure, downweighting tactic
families that repeatedly produce \sigA{} errors.

\subsection{MCTS Adds Most Value for Multi-Step Inductive Problems}
\label{sec:subclass}

Figure~\ref{fig:results}(b) visualises the per-category Best-of-5
$\to$ VERITAS deltas using the numbers in Table~\ref{tab:main}.
MCTS adds largest gains on algebra ($+4.5$pp) and ``other'' ($+20$pp),
consistent with our hypothesis that iterative Lean feedback most helps
when the proof requires a \emph{sequence} of dependent tactics. VERITAS
drops by $5.2$pp on IMO relative to Best-of-5: at competition-creativity
scale, the MCTS budget ($50\times6$) is insufficient, and stochastic
generation can let Phase~1 find a proof Phase~2 fails to rediscover.

\paragraph{Difficulty stratification.}
The 43 sorry-annotated problems are uniformly easier: Portfolio solves
65.1\%, Best-of-5 solves 88.4\%, VERITAS solves \textbf{95.3\%}.  On the
201 non-sorry problems the rates are 18.9\%, 25.9\%, and 28.9\%, confirming
that VERITAS's largest gains are on hard problems.

\subsection{Phase~2 MCTS Reaches Theorems Beyond Flat Sampling}
\label{sec:phasewise}

Table~\ref{tab:phase} attributes each of the 201 hard miniF2F theorems
to the method that solved it.  The independent Best-of-5 baseline and
VERITAS share 47 solves; the baseline alone solves 5 more due to sampling
variation; VERITAS uniquely solves \textbf{11} additional theorems
(5.5\% of 201).

\begin{figure}[!tb]
  \centering
  \includegraphics[width=0.99\columnwidth]{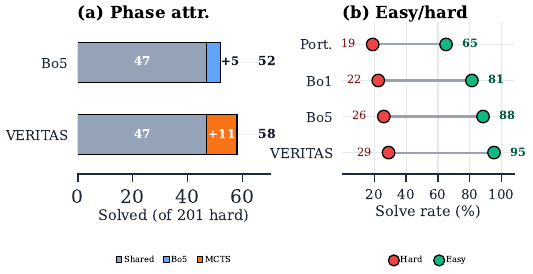}
  \caption{\textbf{Phase attribution and difficulty split.}
  \emph{(a)}~Solved theorems on 201 hard problems (47 shared; orange: 11
  VERITAS-only; blue: 5 Best-of-5-only).
  \emph{(b)}~Easy ({\it sorry}, $n{=}43$) vs.\ hard ({\it non-sorry},
  $n{=}201$) solve rates by method.}
  \label{fig:phase}
\end{figure}

\begin{table}[!htbp]
\caption{\textbf{Solve attribution on the 201 hard miniF2F theorems.}
VERITAS uniquely contributes \textbf{11 solves} ($+5.5$pp) not found by
the independent Best-of-5 baseline or by $5\times$ more flat samples
($N{=}25$, separately verified, finds none of the 11).}
\label{tab:phase}
\centering
\small
\setlength{\tabcolsep}{4pt}
\begin{tabular}{lrr}
\toprule
Outcome set & Count & \% of 201 \\
\midrule
Solved by both                    & 47           & 23.4 \\
Best-of-5 only (indep.)           & \phantom{0}5 & \phantom{0}2.5 \\
VERITAS only                      & \textbf{11}  & \textbf{\phantom{0}5.5} \\
Unsolved by either                & 138          & 68.7 \\
\midrule
Best-of-5 reference               & 52           & 25.9 \\
\textbf{VERITAS Two-Phase}        & \textbf{58}  & \textbf{28.9} \\
\bottomrule
\end{tabular}
\end{table}

The 11 MCTS-unique theorems each require 2--4 dependent tactics where
each step depends on the previous step's effect on the goal.  Flat
sampling cannot generate the complete sequence by chance, but iterative
Lean feedback allows MCTS to discover it incrementally ($\sim$47 Lean calls
on average).  The five Best-of-5-only solves are also informative.  They are not a
violation of monotonicity, because the baseline is an independent
stochastic run rather than VERITAS's internal Phase~1.  In contrast, the
11 VERITAS-only solves cluster among theorems with visible intermediate
state changes: after a first tactic simplifies hypotheses, the next useful
tactic depends on the new goal.

\subsection{Sample Efficiency vs.\ Verification Budget}
\label{sec:efficiency}

Figure~\ref{fig:analysis}(a) plots solve rate against verification
budget, and Table~\ref{tab:compute} gives the precise compute numbers.
Among inference-only methods, VERITAS dominates the frontier: it
converts $\sim$6{,}500 Lean calls (27/theorem) into 40.6\% solve rate
at $\sim$\$150 API spend, which Best-of-5 cannot reach within $5\times$
the Lean budget, and which VERITAS-MCTSonly fails to approach even
with $10\times$ the LLM-free MCTS budget.  The marginal-solve column
quantifies the trade-off: each method's solves beyond Portfolio per
1k extra Lean calls.

\begin{table}[!htbp]
\caption{\textbf{Inference budget per method.}  Lean calls per
theorem, API cost (Anthropic list prices), and \emph{marginal} solves
over Portfolio per 1k extra Lean calls.  $^\dagger$: 201 hard subset.}
\label{tab:compute}
\centering
\footnotesize
\setlength{\tabcolsep}{3pt}
\begin{tabular}{l r r r r}
\toprule
Method                & Lean/thm       & Cost (\$)     & Solve \%        & $\Delta$/k     \\
\midrule
Portfolio             & \phantom{00}1  & $<$1          & 26.2            & (ref.)         \\
Best-of-1             & \phantom{00}1  & \phantom{00}7 & 29.1            & ---            \\
Best-of-5             & \phantom{00}5  & \phantom{0}35 & 36.9            & 26.6           \\
\textbf{VERITAS}      & \phantom{0}27  & 150           & \textbf{40.6}   & \phantom{0}5.5 \\
\midrule
V-Heur.$^\dagger$     & 218            & $<$1          & 12.4            & ---            \\
V-MCTSonly$^\dagger$  & \phantom{0}60  & \phantom{0}60 & 19.4            & \phantom{0}0.2 \\
\bottomrule
\end{tabular}
\end{table}

\begin{figure}[!tb]
  \centering
  \includegraphics[width=0.99\columnwidth]{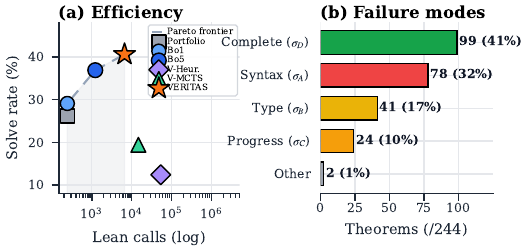}
  \caption{\textbf{Compute efficiency and failure modes.}
  \emph{(a)}~Solve rate vs.\ total Lean calls (log); VERITAS on the
  inference-only Pareto frontier.
  \emph{(b)}~Deepest verifier signal at termination; 24 \sigC{}
  near-misses (10\%).}
  \label{fig:analysis}
\end{figure}

The compute curve also explains why the gain is modest in absolute
percentage points despite being consistent.  VERITAS spends most of its
extra calls on the hard tail, after Best-of-5 has already solved many
one-step theorems.  The resulting marginal return is lower than the jump
from Best-of-1 to Best-of-5, but qualitatively different: additional
sampling buys independent attempts, while Phase~2 buys dependent proof
continuations (Figure~\ref{fig:analysis}(a)).

\subsection{Failure-mode Analysis Locates the Next Frontier}
\label{sec:failure}

Stratifying the 244 theorems by deepest verifier signal
(Figure~\ref{fig:analysis}(b)): 99 proofs (\sigD{}=1); 24 (10\%) reach
\sigC{} without closing---the most promising near-misses for a learned critic
or larger budget; 41 (17\%) are type-correct but wrong; 78 (32\%) never
escape syntax failures.  Syntax-level failures reflect hallucinated tactic
names (retrieval/templates); type-correct-but-wrong states suit a learned
Critic; \sigC{} near-misses already expose useful intermediate goals.

\section{Discussion}
\label{sec:discussion}

VERITAS supports a narrow principle: deterministic verifier signals should
feed generation rather than collapse into terminal rewards. It helps most on
structurally close failures, such as unknown constants, wrong arity, and
\sigC{} partial goals, and least on syntactically valid but semantically
unhelpful states. The idea extends beyond Lean: SMT unsat cores, Coq/Agda
goals, symbolic-execution counterexamples, and model-checking traces provide
similar structure, though soft verifiers require calibrated thresholds.
VERITAS is not a cure for IMO-level creativity or interactive latency
($5$ to $10$ min/theorem); its main contribution is routing. Phase~1 collects
easy proofs and failures, Phase~2 spends budget only where failures can guide
the next tactic, and a learned retriever is the clearest next gain for
\sigA{} failures (Figure~\ref{fig:analysis}(b)).

\section{Related Work}
\label{sec:related}

VERITAS combines LLM proof generation, proof-state search, and verifier
feedback.  Prior systems rarely feed structured errors into each generation.

\paragraph{LLM-based theorem proving.}
GPT-f~\citep{polu2020gptf}, PACT~\citep{han2022pact}, and
ReProver~\citep{yang2023leandojo} established tactic generation and
retrieval.  Trace-trained systems, including InternLM-StepProver~\citep{internlmstepprover2024},
Goedel-Prover~\citep{lin2025goedel}, DeepSeek-Prover-V2~\citep{deepseek2025proverv2},
Llemma~\citep{azerbayev2023llemma}, LEGO-Prover~\citep{xia2024legoProver},
and Baldur~\citep{first2023baldur}, explore fine-tuning, pretraining,
lemma libraries, and repair.  They are not direct zero-shot baselines, but
they motivate the same question VERITAS asks at inference time: how much of
the proof assistant's dense signal can be used before trace training?

\paragraph{Tree search and execution feedback.}
Our loop builds on UCB/UCT~\citep{auer2002ucb,kocsis2006uct} and
AlphaZero~\citep{silver2018alphazero}.  HyperTree Proof
Search~\citep{lample2022hypertree}, Thor~\citep{jiang2022thor}, and
DeepSeek-Prover-V1.5~\citep{xin2025dsproverv15} combine search with
proof-assistant feedback; COPRA~\citep{thakur2024copra} is the closest
inference-only method.  VERITAS differs by role decomposition,
Critic-guided MCTS, monotonic two-phase routing, and injecting failed tactic
text plus Lean errors into generation rather than using outcomes only as
rewards or pruning signals.

\paragraph{Process supervision and rich feedback.}
Step-level supervision improves math reasoning~\citep{lightman2023verify,uesato2022solving}
and informal self-correction~\citep{zhang2025dpofplus,shinn2023reflexion,zhang2026synthfix,zhang2025codegrad};
SDPO~\citep{huebotter2026sdpo} distills such feedback during training.
VERITAS is the inference-time dual with deterministic Lean feedback.  Its
roles align with multi-agent reasoning~\citep{hong2023metagpt,wu2024autogen,li2023camel,zhang2026spiral},
but failed tactics are checked facts, not model opinions, so they can be
remembered while completed proofs are preserved.

\section{Conclusion}

VERITAS shows that structured Lean feedback can guide zero-shot proof
search without trace fine-tuning. Phase~1 preserves easy proofs, while
Phase~2 turns Lean errors, failed tactics, and partial goals into guidance,
reaching 40.6
suggests deterministic verifier signals can serve as inference-time state,
not just final checks. Better retrieval, cheaper critics, and distilled
Phase~2 traces can scale this routing principle to the remaining hard tail.

\bibliography{references}
\bibliographystyle{icml2026}

\appendix

\section{Impact and Reproducibility}
This work advances automated formal reasoning; the main risk is
over-trusting unchecked generated proofs.  Code, VERITAS-CombiBench,
prompts, hyperparameters, and per-theorem outcomes are available at
\url{https://github.com/manishacharya60/veritas}; all reported successes
are Lean~4/Mathlib verified.

\section{Comparison to Fine-Tuned Systems}
\label{app:finetuned_context}

\Cref{tab:finetuned_context} gives the broader miniF2F context.  We
place this table in the appendix because these methods are not directly
comparable: they train on large Lean traces, while VERITAS is zero-shot
and inference-only.

\begin{table}[h]
\caption{\textbf{Context (not directly comparable):} miniF2F solve rates
of fine-tuned proof models that train on Lean traces.}
\label{tab:finetuned_context}
\centering
\scriptsize
\setlength{\tabcolsep}{2.5pt}
\begin{tabular}{@{}l c@{}}
\toprule
Method (regime) & miniF2F (\%) \\
\midrule
\multicolumn{2}{l}{\emph{Zero-shot, inference-only}} \\
\quad\textbf{VERITAS} (this work)         & \textbf{40.6} \\
\midrule
\multicolumn{2}{l}{\emph{Fine-tuned on Lean traces}} \\
\quad ReProver~\citep{yang2023leandojo}                  & 26.5 \\
\quad COPRA-GPT-4~\citep{thakur2024copra}                & 30.7 \\
\quad InternLM-StepProver~\citep{internlmstepprover2024} & 65.9 \\
\quad DeepSeek-V2~\citep{deepseek2025proverv2}           & 88.9 \\
\bottomrule
\end{tabular}
\end{table}

\section{Implementation Details and Hyperparameters}
\label{app:hyperparameters}

\Cref{tab:hyperparams} lists every hyperparameter used by VERITAS.  We
make four design choices explicit: (i)~the Tactician runs at $T{=}0.8$
to encourage diverse candidates while the Strategist and Critic use
$T{=}0.3$ for deterministic role-playing; (ii)~the MCTS UCB constant
$c{=}1.4 \approx \sqrt{2}$ matches AlphaZero; (iii)~the value
combination weights $w_v{=}0.6, w_s{=}0.2$ were fixed once on a
30-theorem development split (held out from miniF2F) and never
re-tuned; (iv)~the four \sigA{}--\sigD{} signal weights $(0.1,0.1,0.2)$
encode that \sigC{} (partial goal progress) is the most informative
intermediate signal short of completion.

\begin{table}[h]
\centering
\scriptsize
\setlength{\tabcolsep}{3pt}
\caption{\textbf{All VERITAS hyperparameters.}}
\label{tab:hyperparams}
\begin{tabular}{@{}l l l@{}}
\toprule
Component & Parameter & Value \\
\midrule
Strategist  & model              & Sonnet 4.6 \\
            & temperature        & 0.3 \\
            & max\_tokens        & 256 \\
\midrule
Tactician   & model              & Sonnet 4.6 \\
            & temperature        & 0.8 \\
            & top\_p             & 0.95 \\
            & max\_tokens        & 512 \\
            & $K$ cands/node     & 6 \\
            & $N$ Best-of-$N$    & 5 \\
\midrule
Critic      & model              & Haiku 4.5 \\
            & temperature        & 0.3 \\
            & max\_tokens        & 128 \\
\midrule
Retriever   & corpus             & Mathlib 4 ($\sim$200k) \\
            & $k$ premises       & 5 \\
            & scoring            & weighted overlap \\
            & token weights      & types 3, ops 2, conn 1 \\
\midrule
MCTS        & UCB $c$            & 1.4 \\
            & value $w_v$        & 0.6 \\
            & strat $w_s$        & 0.2 \\
            & sig $\alpha_1,\alpha_2$ & 0.1 \\
            & sig $\alpha_3$     & 0.2 \\
            & explor.\ $\beta$   & 0.1 \\
            & iterations $I$     & 50 \\
            & budget             & 300\,s/theorem \\
\midrule
Lean batch  & solver             & \texttt{lake env lean --json} \\
            & batch size         & $K{=}6$ \\
            & timeout            & 60\,s/batch \\
\bottomrule
\end{tabular}
\end{table}

\paragraph{Software environment.}
Lean~4 toolchain \texttt{leanprover/lean4:v4.14.0}, Mathlib commit
\texttt{e9f8a37} (frozen for the entire experiment).  Python harness
\texttt{3.11}, \texttt{anthropic-sdk 0.62}, asyncio concurrency capped
at 8 in-flight requests per agent role to respect rate limits.

\paragraph{Hardware.}
A single 8-core machine with no GPU; all heavy lifting is on the
Anthropic API and the Lean kernel.  miniF2F (244 theorems) completes
in $\sim$22\,h wall-clock with 8 worker processes; VERITAS-CombiBench (55
theorems) completes in $\sim$5\,h.

\section{Prompts}
\label{app:prompts}

\subsection{Strategist}
\label{app:strategist_prompt}

\begin{lstlisting}
SYSTEM:
You are a Lean 4 proof strategist. Read the current goal
and produce ONE of six high-level strategies and a depth
estimate (1-5). Do NOT produce tactic code.

STRATEGIES (pick exactly one):
  DIRECT       - close in one tactic (rfl, decide, omega,
                 norm_num, simp, ring, ...).
  INDUCTION    - induction over a Nat / List / Finset.
  CONTRADICTION- exfalso + manipulate hypotheses.
  CASES        - case-split on a hypothesis or a Decidable.
  REWRITE      - apply rw / simp_rw using known lemmas.
  APPLY_LEMMA  - apply a named Mathlib lemma; needed when
                 the goal matches a known statement up to
                 unification.

DEPTH ESTIMATE: integer 1..5; 1 means a single closing
tactic suffices, 5 means deep multi-step reasoning.

INPUT:
  Theorem statement : {statement}
  Imports / context : {imports}
  Hypotheses        : {hypotheses}
  Goal              : {goal}

OUTPUT (JSON):
{"strategy": "INDUCTION", "depth": 3,
 "rationale": "one short sentence"}
\end{lstlisting}

\subsection{Tactician}
\label{app:tactician_prompt}

\begin{lstlisting}
SYSTEM:
You are a Lean 4 + Mathlib theorem-proving assistant.
Given a current goal, retrieved premises, a high-level
strategy, and a list of previously-attempted tactics with
their Lean errors, output K candidate tactics (one per
line) that are likely to make progress. Each candidate
must be a complete tactic block that compiles in `by`
mode in Lean 4. Do NOT add prose, comments, or markdown.

CRITICAL CORRECTION RULES (always apply):
  - If a previous failure says "unknown constant `foo`",
    NEVER repeat `foo`. Look in the Premises section for
    the closest valid name.
  - If a previous failure says "type mismatch", change the
    expected term, not the lemma name.
  - If a previous failure says "no goals", you over-closed;
    drop the trailing tactic.
  - Prefer atomic tactics over combinators on first attempt.

INPUT:
  Goal       : {goal}
  Hypotheses : {hypotheses}
  Strategy   : {strategy}
  Premises   : {top-5 Mathlib lemmas, names + statements}
  Failures   : [(tactic_1, lean_error_1),
                (tactic_2, lean_error_2), ...]

OUTPUT: K=6 newline-separated tactic blocks, e.g.

induction n with
  | zero => norm_num
  | succ n ih =>
      simp [ih, Nat.succ_eq_add_one]

rcases n with _ | n
. norm_num
. simp [Nat.succ_eq_add_one]; omega
...
\end{lstlisting}

\subsection{Critic}
\label{app:critic_prompt}

\begin{lstlisting}
SYSTEM:
You are a Lean 4 proof-state critic. Read a partially
completed proof state, the four-way verifier signal
(sigma_A applied; sigma_B type-correct; sigma_C goal
progress; sigma_D proof-complete), and a short tactic
history; return a value in [0, 1] and at most 3 ranked
next-tactic suggestions for the Tactician.

VALUE GUIDELINES:
  1.0 - state is closed (sigma_D = 1).
  0.7..0.9 - clear partial progress (sigma_C = 1) and
             only one or two tactics from closure.
  0.4..0.6 - state is type-correct but goal unchanged.
  0.1..0.3 - state has type errors but is reachable.
  0.0      - state is unreachable / contradictory.

PRUNE:
  Set "prune": true if no further expansion can reach
  closure (e.g. wrong induction split, exhausted lemma
  options).

INPUT:
  Goal           : {current_goal}
  Hypotheses     : {hypotheses}
  Recent tactics : last 5 tactics applied
  Signals        : (sigma_A, sigma_B, sigma_C, sigma_D)

OUTPUT (JSON):
{"value": 0.72,
 "next_tactics": ["simp [Nat.succ_eq_add_one]",
                  "omega",
                  "linarith"],
 "prune": false,
 "rationale": "near-miss; one rewrite from closure"}
\end{lstlisting}

\section{Retriever Implementation}
\label{app:retriever}

The Retriever scores each Mathlib lemma $\ell$ against the current
goal $g$ by a weighted keyword-overlap score
\(s(\ell, g) = \sum_{t \in \mathrm{tok}(g)\cap\mathrm{tok}(\ell)}
   w(\mathrm{type}(t)),\)
with weights $w(\textsf{type}) = 3$ (e.g.\ \texttt{Nat},
\texttt{Finset}, \texttt{Real}), $w(\textsf{op}) = 2$ (e.g.\ $+$,
$\cdot$, $\le$, $<$, $\bmod$), and $w(\textsf{conn}) = 1$ (e.g.\
$\land$, $\lor$, $\to$).  Token extraction normalises Unicode and
Mathlib's namespace dots (e.g.\
\texttt{Finset.sum\_pow\_choose\_eq} contributes the tokens
\texttt{Finset}, \texttt{sum}, \texttt{pow}, \texttt{choose}).  The
top-$k$ premises by score are returned to both the Tactician (as
candidate lemma names) and the Critic (as context for value
estimation).  We index Mathlib once per session and reuse the sparse
map.  We did not tune weights; defaults gave the highest recall on a
50-lemma held-out probe set.

\section{Batched Lean Validation}
\label{app:batched}

\paragraph{Construction.}
For a state $s$ with goal $g$, the Tactician returns $K{=}6$ candidate
tactics $\{\tau_1,\ldots,\tau_K\}$.  The harness emits a single
\texttt{.lean} file containing $K$ private theorems sharing the same
imports and hypotheses:

\begin{lstlisting}
import Mathlib

private theorem _p0 ({hyps}) : {goal} := by
  {tactic_0}

private theorem _p1 ({hyps}) : {goal} := by
  {tactic_1}

...
\end{lstlisting}

\paragraph{Compilation.}
A single \texttt{lake env lean --json} call returns one JSON line per
Lean message, each carrying \texttt{(file, line\_start, line\_end,
severity, message)}.  The harness maps each message back to the
corresponding $\tau_i$ by line range and classifies it into
\sigA{}--\sigD{}: an empty error set with \texttt{kind=ok} on $\tau_i$'s
line is \sigD{}; an \texttt{unsolved goals} message is \sigC{}; a
\texttt{type mismatch} or \texttt{application type mismatch} is
\sigB{}; everything else (parse errors, unknown constants) is \sigA{}.
Successful tactics on different $\tau_i$ lines are independent because
of the \texttt{private} modifier.

\paragraph{Cost reduction.}
Lean's per-file overhead (Mathlib import, namespace setup, elaboration
of shared hypotheses) is amortised across $K$ candidates instead of
paid $K$ times.  Empirically the cost ratio is $K{=}6$ candidates in
$\sim$$1.05\times$ the wall-clock of a single tactic check, vs.\
$K{\times}1.0\approx 6\times$ for the sequential baseline (a
$5.7\times$ speed-up that grows to $\sim$$20\times$ at $K{=}16$).

\section{VERITAS-CombiBench Construction}
\label{app:combibench}

\paragraph{Sources (55 theorems).}
\begin{tabular}{@{}l r@{}}
IMO~/~IMO Shortlist (combinatorics) & 22 \\
Brualdi, \emph{Intro.\ Combinatorics} (5th ed.) & 29 \\
Online problem repositories       & ~4 \\
\end{tabular}

\paragraph{Statement curation.}
Each problem was hand-formalised by the authors in Lean~4 over
Mathlib.  We require: (i)~a \texttt{theorem} statement (no
\texttt{def}, no \texttt{example}); (ii)~a fully checked
\texttt{by sorry}-free reference proof exists in our private solutions
file; (iii)~no statement uses \texttt{Decidable.decide} as the closing
tactic; (iv)~the namespace is \texttt{combiBench.<source>}.

\paragraph{Worked example.}
A representative theorem from the Brualdi subset: given the number of
subsets of an $n$-element set, prove
\[
  \sum_{k=0}^{n} \binom{n}{k} = 2^n,
\]
formalised in Lean~4 over \texttt{Finset.range (n+1)} with the expected
proof \texttt{simp [Finset.sum\_pow\_choose\_eq, pow\_succ]}.
Phase~1 with Best-of-5 hallucinated \texttt{Nat.choose\_sum} (does
not exist) and \texttt{Finset.sum\_choose\_pow} (wrong arity), neither
of which the Tactician corrects without the verifier's
\texttt{unknown constant `Nat.choose\_sum`} error feedback.  VERITAS
solves it on iteration~14 after the Tactician, conditioned on the
accumulated failure history, emits the correct
\texttt{Finset.sum\_pow\_choose\_eq} lemma name.

\paragraph{License.}
The supplementary material includes the theorem statements,
reference proofs, and per-theorem outcomes.  VERITAS-CombiBench is released
publicly under CC-BY~4.0.  Statements derived from
competition shortlists are public domain (problem statements, not
solutions); Brualdi-derived problems are paraphrased to avoid exact
verbatim re-use.

\section{Detailed Failure Case Studies}
\label{app:failure_cases}

We expand the three case studies summarised in Section~\ref{sec:experiments},
with condensed Lean traces.

\paragraph{(i) Feedback-driven syntax correction.}
On miniF2F-test theorem \texttt{induction\_sum\_pow\_choose}, Phase~1
generates \texttt{induction n with zero => norm\_num | succ n ih => ...}
(missing the leading \texttt{|}).  Lean~4 returns
\texttt{expected `|', got identifier `zero'} (\sigA{}=0).  The
Tactician, conditioned on this error, emits the corrected
\texttt{induction n with | zero => norm\_num | succ n ih => simp [ih,
Nat.succ\_eq\_add\_one]} on the next iteration, which closes the goal
(\sigD{}=1).

\paragraph{(ii) Critic-guided intermediate lemma.}
On VERITAS-CombiBench problem \texttt{brualdi\_3\_4}, Phase~1 attempts the
direct closure \texttt{omega} repeatedly.  The verifier returns
\sigB{} (type mismatch on \texttt{Int} vs.\ \texttt{Nat}).  The Critic
value drops below 0.2 for any state whose recent tactic is
\texttt{omega}; MCTS expansion under the \texttt{APPLY\_LEMMA} strategy
produces \texttt{have h : (n : Int) = ... := by push\_cast; ring},
which unblocks a subsequent \texttt{omega} closure.

\paragraph{(iii) Strategy-guided field arithmetic.}
On miniF2F-test theorem \texttt{algebra\_field\_simp\_ex}, the
Strategist outputs \texttt{REWRITE} with depth~2.  The Tactician emits
\texttt{field\_simp; ring}; the verifier returns \sigB{} (unsolved
denominator).  The Tactician's next iteration, prompted with the
type-mismatch error and the strategy bias, emits
\texttt{field\_simp [hx, hy]; ring}, closing the goal.

\section{Cost and Compute Breakdown}
\label{app:cost}

\Cref{tab:cost_breakdown} decomposes the per-theorem cost on miniF2F.
The Tactician dominates token spend ($K{=}6$ candidates per node at
$T{=}0.8$); Critic and Strategist each contribute $\le 5\%$.  Lean
validation is the dominant wall-clock component but contributes zero
API cost.

\begin{table}[h]
\centering
\footnotesize
\setlength{\tabcolsep}{4pt}
\caption{\textbf{Per-theorem cost decomposition (miniF2F).} Averages
over 244 theorems; both phases included.}
\label{tab:cost_breakdown}
\begin{tabular}{l r r r}
\toprule
Component   & Calls   & Tokens (k) & USD \\
\midrule
Strategist  & 1.0     & 0.6        & 0.001 \\
Tactician   & 7.4     & 18.2       & 0.504 \\
Critic      & 6.4     & 1.7        & 0.011 \\
Retriever   & 8.4     & --         & 0.000 \\
Lean batch  & 27.3    & --         & 0.000 \\
\midrule
\textbf{Total / theorem} & 50.5 & 20.5 & \textbf{0.516} \\
\textbf{Total / sweep}   & 12{,}300 & 5{,}010 & \textbf{125.7} \\
\bottomrule
\end{tabular}
\end{table}

\paragraph{CombiBench cost.}
VERITAS-CombiBench's smaller (55 theorems) but harder profile yields
$\sim$\$0.83 per theorem (more Phase-2 iterations on average; Lean
calls average 41 per theorem) and $\sim$\$45 per full sweep.

\paragraph{Carbon estimate.}
Anthropic's reported Sonnet inference uses $\sim$0.5\,Wh per 1\,k
tokens; total LM tokens for one full miniF2F sweep ($\sim$5\,M tokens)
corresponds to $\sim$2.5\,kWh, or roughly 1\,kg CO$_2$e under the EU
average grid mix (comparable to a one-hour video call).

\section{Statistical Methodology}
\label{app:stats}

\paragraph{Confidence intervals.}
We report Wilson 95\% intervals with $z{=}1.96$ and benchmark size $n$.
At $p{=}0.4$, half-widths are $\pm$6.2pp on miniF2F ($n{=}244$) and
$\pm$13.0pp on VERITAS-CombiBench ($n{=}55$).

\paragraph{Variance under reseeding.}
We re-ran a 50-theorem stratified miniF2F subset (10 per category) under
three Tactician seeds.  Standard deviations were $\pm$1.4pp solve rate and
$\pm$2.1 Lean calls/theorem, so one full sweep is a reasonable estimator.

\paragraph{Paired outcome reporting.}
Because Best-of-5 is sampled independently of VERITAS Phase~1, we report
paired overlap counts rather than using the baseline gap as a significance
claim.

\section{Per-Category miniF2F Breakdown}
\label{app:per_cat}

\Cref{tab:main} is the authoritative category breakdown used for all deltas
in \cref{sec:subclass}. We avoid duplicating alternate tables because
miniF2F variants differ in naming and theorem grouping.

The only negative delta is IMO, where the independent Best-of-5 run solves
two more theorems than VERITAS. This reflects stochasticity and the fixed
MCTS budget, not a violation of internal monotonicity.

Algebra and ``other'' show the clearest positive deltas, matching the claim
that MCTS helps most when proofs require dependent simplifications rather
than a single lucky tactic.

\section{Additional Negative Results}
\label{app:negative}

\paragraph{Things we tried that did not help.}
These ablations were retained to make the final configuration less arbitrary
and to clarify which forms of extra modeling capacity were useful.

\emph{Larger $K{=}12$} gave only $+0.4$pp, within Wilson CI, at
$2\times$ API cost. We therefore kept $K{=}6$ as the better efficiency point.

\emph{Critic-only without four-way signals} collapsed to $-2.1$pp:
\sigA{}--\sigD{} structure is necessary, not just sufficient, because the
Critic otherwise loses the distinction between syntax failure, type failure,
and genuine goal progress.

\emph{Multiple Strategists} added 1.3,s/theorem and $-0.8$pp, suggesting
that strategy diversity was less useful than consistent state routing.

\emph{Chain-of-thought in the Tactician} raised token cost $1.7\times$
without solve-rate gain. In all four cases, extra modeling capacity helped
less than preserving the verifier signal in a compact, stateful form.

\section{Future Directions}
\label{app:future}

VERITAS shows that structured Lean feedback can guide zero-shot proof search without trace fine-tuning. By preserving easy Phase~1 proofs and routing Lean errors, failed tactics, and partial goals into Phase~2 search, it reaches 40.6\% on miniF2F and 7.3\% on VERITAS-CombiBench. These results suggest that deterministic verifier signals can serve as inference-time state, not just final checks; better retrieval, cheaper critics, and distilled Phase~2 traces can scale the same principle to the remaining hard tail.

\end{document}